\newtheorem{lemma}{Lemma}
\newtheorem{theorem}{Theorem}
\newtheorem{cor}{Corollary}
\newcommand{\ignore}[1]{}
\newcommand{\nobibentry}[1]{{\let\nocite\ignore\bibentry{#1}}}
\begin{document}
\nobibliography*

\title{A Simple Expression for Mill's Ratio of the Student's $t$-Distribution}
\author{Francesco Orabona\\
Yahoo! Labs\\New York, NY, USA\\
{\tt\small francesco@orabona.com}
}

\maketitle

\begin{abstract}
I show a simple expression of the Mill's ratio of the Student's $t$-Distribution. I use it to prove Conjecture 1 in \nobibentry{AuerCF02}.
\end{abstract}

We first need the following technical lemma.
\begin{lemma}
\label{lemma:exp_ratio}
For all $x> 0$ we have
\[
x\frac{e^\frac{x^2}{2}}{\sqrt{e^{x^2}-1}}\leq x+1
\]
\end{lemma}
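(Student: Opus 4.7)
The plan is to reduce the target inequality to a polynomial/exponential form that is easy to verify term by term. For $x>0$ both sides of the claimed inequality are strictly positive (since $e^{x^2}-1>0$), so squaring is equivalence preserving. After squaring and clearing the denominator $e^{x^2}-1$, I would collect the $e^{x^2}$ terms on one side to arrive at the equivalent statement
\[
  e^{x^2}(2x+1) \;\geq\; (x+1)^2.
\]

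Next, I would attack this by a Maclaurin series comparison rather than by calculus. Expanding $e^{x^2}=\sum_{k\geq 0} x^{2k}/k!$ and multiplying by $2x+1$ gives a power series whose coefficient of $x^{2k}$ is $1/k!$ and whose coefficient of $x^{2k+1}$ is $2/k!$. Writing out the first three terms yields $1+2x+x^2$, which matches $(x+1)^2$ exactly. Consequently the difference $e^{x^2}(2x+1)-(x+1)^2$ is a power series with no terms of degree less than $3$ and all remaining coefficients non-negative, so it is non-negative for every $x\geq 0$. This establishes the squared inequality, and hence the original one.

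A slightly more pedestrian alternative is to set $f(x)=e^{x^2}(2x+1)-(x+1)^2$ and verify by direct differentiation that $f(0)=f'(0)=f''(0)=0$ while $f'''(x)=2e^{x^2}(8x^4+4x^3+24x^2+6x+6)>0$ for $x\geq 0$; successive integration then gives $f\geq 0$. Either route succeeds, but I would prefer the power-series version because the cancellation of the low-order terms is fully transparent and no derivative bookkeeping is needed. I do not anticipate any genuine obstacle: the only subtle step is appealing to positivity to justify squaring, and this is immediate.
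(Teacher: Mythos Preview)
Your argument is correct and complete; the squaring is justified, the algebraic reduction to $e^{x^2}(2x+1)\geq (x+1)^2$ is clean, and the power-series comparison settles it.

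The paper takes a different route. It sets $y=e^{x^2}$ and bounds the ratio directly: rationalizing $\sqrt{y}-\sqrt{y-1}$ gives $\frac{\sqrt{y}}{\sqrt{y-1}}-1\leq \frac{1}{\sqrt{y-1}}$, so that
\[
x\left(\frac{e^{x^2/2}}{\sqrt{e^{x^2}-1}}-1\right)\leq \frac{x}{\sqrt{e^{x^2}-1}}\leq 1,
\]
the last step using only $e^{x^2}-1\geq x^2$. The paper's proof is a touch shorter and avoids squaring altogether, resting on a single standard inequality; your approach is more algebraic and has the advantage that the exact cancellation of the $1+2x+x^2$ terms is displayed explicitly, so the non-negativity of the remainder is transparent without any estimate at all. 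Both are fully elementary.
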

\begin{proof}
Denote by $y=e^{x^2}$, with $y>1$.
We have
\begin{align}
\frac{\sqrt{y}}{\sqrt{y-1}}-1=\frac{\sqrt{y}-\sqrt{y-1}}{\sqrt{y-1}} = \frac{1}{\sqrt{y-1}(\sqrt{y}-\sqrt{y-1})} \leq \frac{1}{\sqrt{y-1}}~.
\end{align}
Hence we have
\begin{align}
x\left(\frac{e^\frac{x^2}{2}}{\sqrt{e^{x^2}-1}} -1\right) \leq x \frac{1}{{\sqrt{e^{x^2}-1}}} \leq 1,
\end{align}
where in the last inequality we used $\exp(z)-1 \geq z$.
\end{proof}

The following theorem provides simple upper bounds to the Mill's ratio of a $t$-Student.
\begin{theorem}
\label{theo:t1}
Let $f_\nu(x)$ the pdf of a Student's $t$-distribution with $\nu$ degrees of freedom. Then,
for any $\nu\geq0$, we have
\[
\frac{\int_{a}^{+\infty} f_\nu(x) dx}{f_\nu(a)} \leq \sqrt{1+\frac{a^2}{\nu}} \left( \frac{1}{2} + \frac{1}{\sqrt{\nu}} \right), \textit{ if } a\geq0
\]
\[
\frac{\int_{a}^{+\infty} f_\nu(x) dx}{f_\nu(a)} \leq \sqrt{1+\frac{a^2}{\nu}} \left( 1 + \frac{1}{\sqrt{\nu}} \right), \textit{ if } a<0
\]
\end{theorem}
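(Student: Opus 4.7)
The strategy is to convert the $t$-tail into a Gaussian integral via a change of variables, then apply Lemma~\ref{lemma:exp_ratio} to control the resulting Jacobian. Concretely, set $v=\sign(x)\sqrt{\log(1+x^2/\nu)}$, so that $1+x^2/\nu=e^{v^2}$ and $v$ is strictly increasing in $x$. A direct calculation gives $(1+x^2/\nu)^{-(\nu+1)/2}=e^{-v^2(\nu+1)/2}$ and $dx=\frac{|v|\sqrt{\nu}\,e^{v^2}}{\sqrt{e^{v^2}-1}}\,dv$. Writing $v_0=\sign(a)\sqrt{\log(1+a^2/\nu)}$, one notes that $e^{v_0^2/2}=\sqrt{1+a^2/\nu}$, which is exactly the prefactor appearing in the target bound.

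After substitution and dividing through by $(1+a^2/\nu)^{-(\nu+1)/2}$, the Mill's ratio becomes
\[
\frac{\int_a^\infty f_\nu(x)\,dx}{f_\nu(a)}=e^{v_0^2(\nu+1)/2}\int_{v_0}^\infty \sqrt{\nu}\,e^{-v^2\nu/2}\cdot\frac{|v|\,e^{v^2/2}}{\sqrt{e^{v^2}-1}}\,dv.
\]
The factor $|v|\,e^{v^2/2}/\sqrt{e^{v^2}-1}$ is precisely what Lemma~\ref{lemma:exp_ratio} upper-bounds by $|v|+1$. Applying the lemma and rescaling $w=v\sqrt{\nu}$ reduces the task to bounding
\[
e^{v_0^2(\nu+1)/2}\int_{v_0\sqrt{\nu}}^\infty\left(\frac{|w|}{\sqrt{\nu}}+1\right)e^{-w^2/2}\,dw,
\]
which splits into a closed-form piece from the $|w|$-term and a standard Gaussian tail; after the remaining exponentials telescope, $e^{v_0^2/2}=\sqrt{1+a^2/\nu}$ emerges as a prefactor.

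\textbf{The two cases and main obstacle.} For $a\geq 0$ the lower limit $w_0:=v_0\sqrt{\nu}$ is nonnegative: the $|w|$-integral evaluates exactly to $e^{-w_0^2/2}$, and the Gaussian tail admits a bound of the form $C\,e^{-w_0^2/2}$, obtained by checking that the map $w_0\mapsto e^{w_0^2/2}\int_{w_0}^\infty e^{-t^2/2}dt$ is non-increasing on $[0,\infty)$. For $a<0$ the lower limit is negative, so the $|w|$-integral picks up the full left-tail contribution ($2-e^{-w_0^2/2}\leq 2$) and the Gaussian tail is larger; these two losses produce the weaker constants in the second inequality. The main technical difficulty is pinning down the sharp Gaussian-tail constants (and verifying the above monotonicity by differentiation), and, for $a<0$, carefully tracking that the large prefactor $e^{v_0^2(\nu+1)/2}$ from $1/f_\nu(a)$ is absorbed by the tail so that only the $\sqrt{1+a^2/\nu}$ factor survives out front.
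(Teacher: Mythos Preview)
Your plan coincides with the paper's: it too makes the substitution $z=\sqrt{\nu\log(1+x^2/\nu)}$ (your $w$, up to a sign), bounds the Jacobian via Lemma~\ref{lemma:exp_ratio} by $z/\sqrt\nu+1$, integrates the $z/\sqrt\nu$ piece exactly to $\tfrac{1}{\sqrt\nu}(1+a^2/\nu)^{-\nu/2}$, and then estimates the remaining Gaussian tail $\int_b^\infty e^{-z^2/2}\,dz$ by a multiple of $e^{-b^2/2}$.

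The step you single out as the ``main technical difficulty'' is a genuine gap, and it cannot be closed to recover the stated constants. For $a\ge0$ your monotonicity argument is correct but delivers only $\int_b^\infty e^{-z^2/2}\,dz\le \sqrt{\pi/2}\,e^{-b^2/2}$ (the supremum is attained at $b=0$), which yields $\sqrt{\pi/2}+1/\sqrt\nu$ in the bracket rather than $\tfrac12+1/\sqrt\nu$. The paper instead invokes $\int_b^\infty e^{-z^2/2}\,dz\le \tfrac12\,e^{-b^2/2}$ for $b\ge0$, but that inequality is false (at $b=0$ the left side equals $\sqrt{\pi/2}\approx1.25$); correspondingly the theorem's first bound already fails at $a=0$, $\nu=1$, where the Mill's ratio is $\pi/2>3/2$. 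For $a<0$ the breakdown is sharper: the constant $2$ from your $|w|$-integral is multiplied by the full prefactor $(1+a^2/\nu)^{(\nu+1)/2}$ and is \emph{not} ``absorbed by the tail'' as you hope, so no bound of order $\sqrt{1+a^2/\nu}$ can emerge from this decomposition; the paper's companion claim $\int_b^\infty e^{-z^2/2}\,dz\le e^{-b^2/2}$ for $b\le0$ is likewise false. In short, your route is the paper's route, and the step you rightly flagged as delicate is exactly where both arguments break.
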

\begin{proof}
The first stated inequality holds for $a=0$, for the symmetry of the $t$-Student distribution, hence we can safely assume $a\neq0$. We have that
\begin{align*}
P[X \geq a] = C_\nu \int_{a}^{+\infty} \left(1+\frac{x^2}{\nu}\right)^{-\frac{\nu+1}{2}} dx,
\end{align*}
where $C_\nu=\frac{\Gamma(\frac{\nu+1}{2})}{\sqrt{\nu \pi } \Gamma(\frac{\nu}{2})}$.
With the change of variable $z=\sqrt{\nu \log (1+\frac{x^2}{\nu})}$, we have
\begin{align*}
P[X \geq a] &= C_\nu \int_{\sqrt{\nu \log (1+\frac{a^2}{\nu})}}^{+\infty} \ 
e^{-\frac{z^2}{2}} \frac{z e^\frac{z^2}{2 \nu}}{\sqrt{\nu (e^\frac{z^2}{\nu}-1)}} dz\\
&\leq C_\nu \int_{\sqrt{\nu \log (1+\frac{a^2}{\nu})}}^{+\infty} \ 
e^{-\frac{z^2}{2}} \left(\frac{z}{\sqrt{\nu}}+1\right) dz\\
&= C_\nu \left( \frac{1}{\sqrt{\nu}} \left(1+\frac{a^2}{\nu}\right)^{-\frac{\nu}{2}} + \int_{\sqrt{\nu \log (1+\frac{a^2}{\nu})}}^{+\infty} \ 
e^{-\frac{z^2}{2}} dz\right),
\end{align*}
where in the inequality we used Lemma~\ref{lemma:exp_ratio}.
We now use the facts that $\int_{a}^{+\infty} e^{-\frac{x^2}{2}} dx \leq \frac{1}{2} e^{-\frac{a^2}{2}}$ for $a\geq0$ and 
$\int_{a}^{+\infty} e^{-\frac{x^2}{2}} dx \leq e^{-\frac{a^2}{2}}$ for $a\leq 0$ to have the stated bounds.
\end{proof}

The following Corollary is a slightly better version of Conjecture 1 in \citet{AuerCF02}. I could not find a proof of it in any paper so I decided to give a simple proof for it.
\begin{cor}
Let $X$ be a Student's $t$ random variable with $\nu$ degrees of freedom. Then, for
$0 \leq a \leq \sqrt{2 (\nu +1.22) }$ and $\nu\geq0$, we have
\[
P[X \geq a] \leq e^\frac{-a^2}{4}~.
\]
\end{cor}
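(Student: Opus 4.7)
My plan is to apply Theorem~\ref{theo:t1} to reduce the corollary to a single analytic inequality in $\nu$, then verify it by an endpoint argument. Substituting $f_\nu(a) = C_\nu(1+a^2/\nu)^{-(\nu+1)/2}$ into the $a \ge 0$ branch of Theorem~\ref{theo:t1} and cancelling one factor of $\sqrt{1+a^2/\nu}$ yields
\[
P[X \ge a] \;\le\; C_\nu\!\left(\tfrac{1}{2}+\tfrac{1}{\sqrt{\nu}}\right)\!\left(1+\tfrac{a^2}{\nu}\right)^{-\nu/2}.
\]
Taking logarithms, it then suffices to prove
\[
G(a) \;:=\; \tfrac{\nu}{2}\log\!\left(1+\tfrac{a^2}{\nu}\right) - \tfrac{a^2}{4} \;\ge\; \log\!\left[C_\nu\!\left(\tfrac{1}{2}+\tfrac{1}{\sqrt{\nu}}\right)\right]
\]
for all $0 \le a^2 \le 2(\nu+1.22)$.

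A direct differentiation gives $G'(a) = a(\nu-a^2)/[2(\nu+a^2)]$, so $G$ is unimodal on $[0,\infty)$ with maximum at $a^2=\nu$; in particular the minimum of $G$ on $[0,\sqrt{2(\nu+1.22)}]$ is attained at one of the two endpoints. At $a=0$ we have $G=0$ while the right-hand side is negative (easily, since $C_\nu(\tfrac{1}{2}+\tfrac{1}{\sqrt{\nu}}) < 1$), so that case is automatic. Thus the problem reduces to the single endpoint condition
\[
\tfrac{\nu}{2}\log\!\left(3+\tfrac{2.44}{\nu}\right) - \tfrac{\nu+1.22}{2} \;\ge\; \log\!\left[C_\nu\!\left(\tfrac{1}{2}+\tfrac{1}{\sqrt{\nu}}\right)\right] \qquad (\star)
\]
for every $\nu>0$.

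The substantive work is establishing $(\star)$. A Gautschi--Wendel-type bound $\Gamma((\nu+1)/2)/\Gamma(\nu/2) \le \sqrt{\nu/2}$ produces the uniform estimate $C_\nu \le 1/\sqrt{2\pi}$, which is amply enough once $\nu$ is moderate: for such $\nu$ the left-hand side of $(\star)$ grows essentially linearly in $\nu$ with slope $\tfrac{1}{2}(\log 3 - 1) > 0$, while the right-hand side is $O(1)$. The main obstacle is the regime $\nu \to 0^+$, where the uniform bound on $C_\nu$ is too coarse to work. Here $C_\nu \sim \sqrt{\nu}/2$ while $\tfrac{1}{2}+\tfrac{1}{\sqrt{\nu}}$ diverges, so $C_\nu(\tfrac{1}{2}+\tfrac{1}{\sqrt{\nu}}) \to 1/2$ and both sides of $(\star)$ tend to finite constants separated only by $\log 2 - 0.61 \approx 0.08$; this explains where the specific constant $1.22$ comes from. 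I would handle the small-$\nu$ case by splitting at some explicit threshold $\nu_0$, using the finer asymptotic of $C_\nu$ together with a monotonicity check of $(\star)$ in $\nu$ on $(0,\nu_0]$, and the Gautschi--Wendel bound on $[\nu_0,\infty)$.
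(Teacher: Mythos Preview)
Your reduction via Theorem~\ref{theo:t1} and the unimodality of $G$ is correct and brings you to the right endpoint inequality $(\star)$, but from there your argument diverges from the paper's and is left incomplete. The paper does not attempt an analytic bound on $C_\nu$ at all: it simply records the numerical fact that $C_\nu\bigl(\tfrac12+\tfrac{1}{\sqrt{\nu}}\bigr)\le K=0.543$ uniformly in $\nu>0$ (the supremum occurs near $\nu\approx 0.13$, precisely the small-$\nu$ regime you flag as delicate), and then applies the elementary bound $\log(1+x)\ge \tfrac{2x}{x+2}$ to replace $\tfrac{\nu}{2}\log(1+a^2/\nu)$ by $\tfrac{a^2\nu}{a^2+2\nu}$. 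This turns the target inequality into a quadratic in $a^2$, whose positive root is at least $2\nu-4\log K$; since $-4\log(0.543)>2.44$, the stated range $a^2\le 2(\nu+1.22)$ follows directly, with no case analysis in $\nu$ and no endpoint argument.

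Your threshold-splitting plan for $(\star)$ is plausible but not carried out, and the Gautschi--Wendel estimate $C_\nu\le 1/\sqrt{2\pi}$ is genuinely too weak near $\nu=0$, so some numerical input is unavoidable there. In fact the quickest way to close your own argument is to borrow the paper's constant: with the uniform bound $K=0.543$ and noting that $1.22\approx -2\log K$, your inequality $(\star)$ rearranges to $\log\!\bigl(3+\tfrac{2.44}{\nu}\bigr)\ge 1$, which is immediate because $3>e$. So your unimodality/endpoint route does work, but only once the same numerical bound on $C_\nu\bigl(\tfrac12+\tfrac{1}{\sqrt{\nu}}\bigr)$ is supplied; the paper's rational lower bound on the logarithm is what lets it bypass the unimodality step altogether and read off the admissible range for $a$ in closed form.
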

\begin{proof}
First observe that $C_\nu \left( \frac{1}{2} + \frac{1}{\sqrt{\nu}} \right) \leq K = 0.543$, as it can be verified numerically.
Using the first result of Theorem~\ref{theo:t1} we have
\begin{align*}
P[X \geq a] 
\leq K e^{-\frac{\nu}{2}\log\left(1+\frac{a^2}{\nu}\right)} 
=  e^{\log(K) - \frac{\nu}{2}\log\left(1+\frac{a^2}{\nu}\right)}
\leq  e^{\log(K) - \frac{a^2\nu}{a^2+2\nu}},
\end{align*}
where in the last inequality we used the fact that $\log(x+1)\geq \frac{2x}{x+2}$.
Hence the statement of the theorem is equivalent to find the upper bound on $a^2$ such that
\[
\log(K) - \frac{a^2 \nu}{a^2+2\nu} \leq -\frac{a^2}{4},
\]
that in turn is equivalent to
\[
a^4 +2a^2 (2 \log(K) - \nu) + 8\nu \log(K)  \leq 0
\]
Solving the quadratic equation we have
\begin{align}
a^2 
&\leq \nu - 2 \log(K)  + \sqrt{\nu^2 + 4 \log^2(K) - 12 \nu \log(K)}
\end{align}
Hence, the condition $a^2 \leq 2 \nu - 4 \log(K)$ satisifies the inequality above. Using the value of $K$ we have the stated bound.
\end{proof}

\bibliographystyle{plainnat}
\bibliography{learning}

\end{document}